\documentclass{article}
\PassOptionsToPackage{numbers,sort&compress}{natbib}

\usepackage{natbib}
\usepackage[
  letterpaper,
  textheight=9in,
  textwidth=5.5in,
  top=1in
  ]{geometry}

\usepackage[utf8]{inputenc} 
\usepackage[T1]{fontenc}    
\usepackage[colorlinks]{hyperref}       
\usepackage{url}            
\usepackage{booktabs}       
\usepackage{amsfonts} 
\usepackage{nicefrac}       
\usepackage{microtype}      

\usepackage{color}
\definecolor{darkblue}{rgb}{0.0,0.0,0.55}
\hypersetup{
  colorlinks = true,
  citecolor  = darkblue,
  linkcolor  = darkblue,
  citecolor  = darkblue,
  filecolor  = darkblue,
  urlcolor   = darkblue,
}

\usepackage{amsmath} 
\usepackage{float}
\usepackage{algorithm}
\usepackage{algorithmic}
\usepackage{amssymb}
\usepackage{amsthm}
\usepackage{xspace}
\usepackage{enumerate}

\usepackage{subcaption}

\usepackage{graphicx}

\newlength\figWsynthleft
\newlength\figWsynthright
\newlength\figWgenesleft
\newlength\figWgenesright
\newlength\figH
\newlength\figW
\setlength{\figH}{3cm}
\setlength{\figWsynthleft}{.32\linewidth}
\setlength{\figWsynthright}{.31\linewidth}
\setlength{\figWgenesleft}{.4\linewidth}
\setlength{\figWgenesright}{.4\linewidth}

\newcommand{\algoa}{\textsc{KrK-Picard}\xspace}
\newcommand{\algob}{\textsc{Joint-Picard}\xspace}
\newcommand{\picard}{\textsc{Picard}\xspace}

\newcommand{\Yc}{\mathcal{Y}}
\newcommand{\Pc}{\mathcal{P}}
\newtheorem{theorem}{Theorem}[section]

\newtheorem{prop}[theorem]{Proposition}
\newtheorem{corr}[theorem]{Corollary}

\theoremstyle{definition}
\newtheorem{defn}[theorem]{Definition}

\newcommand{\kron}{\otimes}

\DeclareMathOperator{\vect}{vec}
\DeclareMathOperator{\mat}{mat}
\DeclareMathOperator{\Tr}{Tr}

\newcommand{\krondpp}{\textsc{KronDpp}\xspace}
\newcommand{\krondpps}{\textsc{KronDpps}\xspace}
\newcommand{\reals}{\mathbb{R}}

\title{Kronecker Determinantal Point Processes}

\author{
\normalsize  Zelda Mariet \\
\normalsize  Massachusetts Institute of Technology\\
\normalsize  Cambridge, MA 02139 \\
\normalsize  \texttt{zelda@csail.mit.edu} \\
  \and
\normalsize  Suvrit Sra \\
\normalsize  Massachusetts Institute of Technology \\
\normalsize  Cambridge, MA 02139 \\
\normalsize  \texttt{suvrit@mit.edu} \\
}

\date{}
\begin{document}
\maketitle

\begin{abstract}
  Determinantal Point Processes (DPPs) are probabilistic models over all subsets a ground set of $N$ items. They have recently gained prominence in several applications that rely on ``diverse'' subsets. However, their applicability to large problems is still limited due to the $\mathcal O(N^3)$ complexity of core tasks such as sampling and learning. We enable efficient sampling and learning for DPPs by introducing \krondpp, a DPP model whose kernel matrix decomposes as a tensor product of multiple smaller kernel matrices. This decomposition immediately enables fast \emph{exact} sampling. But contrary to what one may expect, leveraging the Kronecker product structure for speeding up DPP learning turns out to be more difficult. We overcome this challenge, and derive batch and stochastic optimization algorithms for efficiently learning the parameters of a \krondpp. 
\end{abstract}

\section{Introduction}
\vspace*{-6pt}
Determinantal Point Processes (DPPs) are discrete probability models over the subsets of a ground set of $N$ items. They provide an elegant model to assign probabilities to an exponentially large sample, while permitting tractable (polynomial time) sampling and marginalization. They are often used to provide models that balance ``diversity'' and quality, characteristics valuable to numerous problems in machine learning and related areas~\citep{kuleszaBook}. 

The antecedents of DPPs lie in statistical mechanics~\citep{macchi75}, 
but since the seminal work of~\citep{kuleszaThesis} they have made inroads into machine learning. By now they have been applied to a variety of problems such as document and video summarization~\citep{lin12,chao15}, sensor placement~\citep{krause08}, recommender systems~\citep{zhou10}, and object retrieval~\citep{affandi14}. More recently, they have been used to compress fully-connected layers in neural networks~\citep{marietSra15b} and to provide optimal sampling procedures for the Nystr\"om method~\citep{li16}. The more general study of DPP properties has also garnered a significant amount of interest, see e.g.,~\citep{hough06,kulesza11,kuleszaBook,affandi13b,decreuse,lavancier2015,borodin2009,lyons03}. 

However, despite their elegance and tractability, widespread adoption of DPPs is impeded by the $O(N^3)$ cost of basic tasks such as (exact) sampling~\citep{hough06,kuleszaBook} and learning~\citep{hough06,kuleszaBook,gillenwater14,marietSra15}. This cost has motivated a string of recent works on approximate sampling methods such as MCMC samplers~\citep{kang13,li16} or core-set based samplers~\citep{li15}. The task of learning a DPP from data has received less attention; the methods of~\citep{gillenwater14,marietSra15} cost $O(N^3)$ per iteration, which is clearly unacceptable for realistic settings. This burden is partially ameliorated in~\citep{gartrell16}, who restrict to learning low-rank DPPs, though at the expense of being unable to sample subsets larger than the chosen rank.

These considerations motivate us to introduce \krondpp, a DPP model that uses Kronecker (tensor) product kernels. As a result, \krondpp enables us to learn large sized DPP kernels, while also permitting efficient (exact and approximate) sampling. The use of Kronecker products to scale matrix models is a popular and effective idea in several machine-learning settings~\citep{wu05,martens15,flaxman2015fast,zhang15}. But as we will see, its efficient execution for DPPs turns out to be surprisingly challenging. 

To make our discussion more concrete, we recall some basic facts now. Suppose we have a ground set of $N$ items $\Yc = \{1, \ldots, N\}$. A discrete DPP over $\Yc$ is a probability measure $\Pc$ on $2^\Yc$ parametrized by a positive definite matrix $K$ (the \emph{marginal kernel}) such that $0 \preceq K \preceq I$, so that for any $Y \in \Yc$ drawn from $\Pc$, the measure satisfies
\begin{equation}
  \label{eq:1}
  \forall A \subseteq \Yc, \qquad \Pc(A \subseteq Y) = \det(K_A),
\end{equation}
where $K_A$ is the submatrix of $K$ indexed by elements in $A$ (i.e., $K_A = [K_{ij}]_{i,j \in A}$).
If a DPP with marginal kernel $K$ assigns nonzero probability to the empty set, the DPP can alternatively be parametrized by a positive definite matrix $L$ (the \emph{DPP kernel}) so that
\begin{equation}
  \label{eq:2}
  \Pc(Y) \propto \det(L_Y)\quad\implies\quad \Pc(Y) = \frac{\det(L_Y)}{\det(L+I)}.
\end{equation}
A brief manipulation (see e.g.,~\citep[Eq.~15]{kuleszaBook}) shows that when the inverse exists, $L = K(I-K)^{-1}$. The determinants, such as in the normalization constant in~\eqref{eq:2}, make operations over DPPs typically cost  $\mathcal O(N^3)$, which is a key impediment to their scalability.

Therefore, if we consider a class of DPP kernels whose structure makes it easy to compute determinants, we should be able to scale up DPPs. An alternative approach towards scalability is to restrict the size of the subsets, as done in $k$-DPP~\citep{kulesza11} or when using rank-$k$ DPP kernels~\citep{gartrell16} (where $k\ll N$). Both these approaches still require $O(N^3)$ preprocessing for exact sampling; another caveat is that they limit the DPP model by assigning zero probabilities to sets of cardinality greater than $k$. 

In contrast, \krondpp uses a kernel matrix of the form $L = L_1 \kron \ldots \kron L_m$, where each \emph{sub-kernel} $L_i$ is a smaller positive definite matrix. This decomposition has two key advantages: (i) it significantly lowers the number of parameters required to specify the DPP from $N^2$ to $\mathcal O(N^{2/m})$ (assuming the sub-kernels are roughly the same size); and (ii) it enables fast sampling and learning.

For ease of exposition, we describe specific details of \krondpp for $m=2$; as will become clear from the analysis, typically the special cases $m=2$ and $m=3$ should suffice to obtain low-complexity sampling and learning algorithms.

\paragraph{Contributions.} Our main contribution is the \krondpp model along with efficient algorithms for sampling from it and learning a Kronecker factored kernel. 
Specifically, inspired by the algorithm of~\citep{marietSra15}, we develop \algoa (\textbf{Kr}onecker-\textbf{K}ernel Picard), a block-coordinate ascent procedure that generates a sequence of Kronecker factored estimates of the DPP kernel while ensuring monotonic progress on its (difficult, nonconvex) objective function. More importantly, we show how to implement \algoa to run in $\mathcal O(N^2)$ time when implemented as a batch method, and in $\mathcal O(N^{3/2})$ time and $\mathcal O(N)$ space, when implemented as a stochastic method. 
As alluded to above, unlike many other uses of Kronecker models, \krondpp does not admit trivial scaling up, largely due to extensive dependence of DPPs on arbitrary submatrices of the DPP kernel. An interesting theoretical nugget that arises from our analysis is the combinatorial problem that we call \emph{subset clustering}, a problem whose (even approximate) solution can lead to further speedups of our algorithms.

\section{Preliminaries}
We begin by recalling basic properties of Kronecker products needed in our analysis; we omit proofs of these well-known results for brevity. The Kronecker (tensor)  product of $A\in \reals^{p\times q}$ with $B\in \reals^{r\times s}$ two matrices is defined as the $pr\times qs$ block matrix $A\kron B = [a_{ij}B]_{i,j=1}^{p,q}$. 

We denote the block $a_{ij}B$ in $A\kron B$ by $(A\kron B)_{(ij)}$ for any valid pair $(i,j)$, and extend the notation to non-Kronecker product matrices to indicate the submatrix of size $r\times s$ at position $(i,j)$. 
\begin{prop}
  \label{prop:basic}
  Let $A,B,C,D$ be matrices of sizes so that $AC$ and $BD$ are well-defined. Then,
  \begin{enumerate}[(i)]
    \setlength\itemsep{0pt}
    \item If $A, B \succeq 0$, then, $A \kron B \succeq 0$;
    \item If $A$ and $B$ are invertible then so is $A\kron B$, with $(A \kron B)^{-1} = A^{-1} \kron B^{-1}$;
    \item $(A \kron B)(C \kron D)$ = $(AC) \kron (BD)$.
  \end{enumerate}
\end{prop}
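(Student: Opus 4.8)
The plan is to prove each of the three properties directly from the definition of the Kronecker product, since these are all elementary facts whose proofs reduce to tracking block structure. Throughout I will write $A \in \reals^{p\times q}$ and $B \in \reals^{r\times s}$ so that $A \kron B$ is the $pr \times qs$ block matrix with $(i,j)$-block equal to $a_{ij}B$.

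I would begin with part (iii), the \emph{mixed-product property}, because it is purely algebraic and the other two parts follow from it cleanly. Assume the sizes are compatible, so that $A \in \reals^{p\times q}$, $C \in \reals^{q\times t}$, $B \in \reals^{r\times s}$, $D \in \reals^{s\times u}$. The idea is to compute the $(i,k)$-block of $(A\kron B)(C\kron D)$ by summing over the shared block index: the block product gives $\sum_{j} (a_{ij}B)(c_{jk}D) = \bigl(\sum_j a_{ij}c_{jk}\bigr) BD = (AC)_{ik}\, BD$, which is exactly the $(i,k)$-block of $(AC)\kron(BD)$. So the whole argument is just block matrix multiplication together with the scalar factoring $a_{ij}c_{jk}$, and the recognition that $\sum_j a_{ij}c_{jk}$ is the $(i,k)$ entry of $AC$.

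For part (ii), I would verify the claimed inverse by direct multiplication using (iii): $(A\kron B)(A^{-1}\kron B^{-1}) = (AA^{-1})\kron(BB^{-1}) = I_p \kron I_r = I_{pr}$, and symmetrically for the product in the other order, so $A^{-1}\kron B^{-1}$ is a genuine two-sided inverse. This requires only that $A$ and $B$ be square and invertible, and the fact that $I_p \kron I_r = I_{pr}$, which is immediate from the block definition (the diagonal blocks are $I_r$ and the off-diagonal blocks vanish). For part (i), I would use the spectral characterization: if $A, B \succeq 0$, write $A = U\Lambda U^\top$ and $B = V M V^\top$ with $\Lambda, M \succeq 0$ diagonal; then by (iii), $A \kron B = (U \kron V)(\Lambda \kron M)(U\kron V)^\top$, where $U\kron V$ is orthogonal (again by (iii), $(U\kron V)(U\kron V)^\top = (UU^\top)\kron(VV^\top) = I$) and $\Lambda\kron M$ is a diagonal matrix whose entries are the pairwise products $\lambda_i \mu_j \ge 0$. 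Hence $A\kron B$ is similar via an orthogonal transformation to a nonnegative diagonal matrix, so it is positive semidefinite.

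No single step is a genuine obstacle here, since these are standard identities; the only place to be careful is the bookkeeping in part (iii), where one must correctly identify block indices versus entry indices and keep the scalar coefficients $a_{ij}$ from $A$ and $c_{jk}$ from $C$ straight while pulling them through the block product. Everything else follows formally from (iii), so I would present (iii) in full detail and then derive (i) and (ii) as short corollaries.
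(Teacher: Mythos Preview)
Your proof is correct and complete; each of the three parts is handled by the standard argument. The paper itself omits the proof entirely, stating only that these are well-known results, so there is no paper proof to compare against.
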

An important consequence of Prop.~\ref{prop:basic}$(iii)$ is the following corollary.
\begin{corr}
  \label{corr:eigendecompose}
  Let $A=P_AD_AP_A^\top$ and $B=P_BD_BP_B^\top$ be the eigenvector decompositions of $A$ and $B$. Then, $A\kron B$ diagonalizes as $(P_A \kron P_B)(D_A \kron D_B) (P_A \kron P_B)^\top$.
\end{corr}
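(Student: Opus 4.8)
The plan is to treat this as an immediate consequence of the mixed-product property in Prop.~\ref{prop:basic}$(iii)$: I would use it to rewrite the Kronecker product of the two given eigendecompositions, and then separately check that the resulting factors genuinely constitute an eigendecomposition rather than just an algebraic identity.

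First I would substitute the eigendecompositions and apply Prop.~\ref{prop:basic}$(iii)$ twice. Writing $A \kron B = (P_A D_A P_A^\top) \kron (P_B D_B P_B^\top)$, one application of the mixed-product rule peels off the leftmost factors and a second application separates the remaining two, yielding
\begin{equation*}
A \kron B = (P_A \kron P_B)\,(D_A \kron D_B)\,(P_A^\top \kron P_B^\top).
\end{equation*}

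Next I would verify that this really is an eigendecomposition. Two small facts suffice: (a) $(P_A \kron P_B)^\top = P_A^\top \kron P_B^\top$, which is read off directly from the block definition $A \kron B = [a_{ij}B]$; and (b) $P_A \kron P_B$ is orthogonal. For (b), combining (a) with Prop.~\ref{prop:basic}$(iii)$ gives $(P_A \kron P_B)^\top (P_A \kron P_B) = (P_A^\top P_A) \kron (P_B^\top P_B) = I \kron I = I$, using that $P_A$ and $P_B$ are orthogonal. Finally, $D_A \kron D_B$ is diagonal, since each block $d_{ii} D_B$ is diagonal and these blocks sit on the block diagonal of $D_A \kron D_B$. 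Hence the right-hand side has the form $Q \Lambda Q^\top$ with $Q = P_A \kron P_B$ orthogonal and $\Lambda = D_A \kron D_B$ diagonal, which is exactly the claimed diagonalization.

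I do not expect a genuine obstacle: the entire content is carried by Prop.~\ref{prop:basic}$(iii)$, and the argument is a short chain of the mixed-product identity applied to structured factors. The only point meriting care is that producing the factorization algebraically is not by itself enough — one must confirm orthogonality of $P_A \kron P_B$ and diagonality of $D_A \kron D_B$ so that the factorization actually exhibits the eigenvalues (the pairwise products of eigenvalues of $A$ and $B$, appearing on the diagonal of $D_A \kron D_B$) and the eigenvectors (the tensor products of eigenvectors, appearing as the columns of $P_A \kron P_B$).
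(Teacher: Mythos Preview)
Your proposal is correct and matches the paper's approach: the paper states the corollary as an immediate consequence of Prop.~\ref{prop:basic}$(iii)$ and gives no further proof, so your expansion via the mixed-product identity is exactly what is intended. Your additional checks that $P_A \kron P_B$ is orthogonal and $D_A \kron D_B$ is diagonal are a welcome clarification that the paper leaves implicit.
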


We will also need the notion of partial trace operators, which are perhaps less well-known:
\begin{defn}
  Let $A \in \mathbb R^{N_1N_2 \times N_1N_2}$. The \emph{partial traces} $\Tr_1(A)$ and $\Tr_2(A)$ are defined as follows:
    \[\Tr_1(A) := \left[\Tr(A_{(ij)}\right]_{1 \leq i, j \leq N_1} \in \mathbb R^{N_1\times N_1}, \qquad \Tr_2(A) := \sum\nolimits_{i=1}^{N_1} A_{(ii)} \in \mathbb R^{N_2 \times N_2}.\]
\end{defn}
The action of partial traces is easy to visualize: indeed, $\Tr_1(A \kron B) = \Tr(B) A$ and $\Tr_2(A \kron B) = \Tr(A) B$. For us, the most important property of partial trace operators is their positivity.
\begin{prop}
  \label{prop:posdef-operator}
  $\Tr_1$ and $\Tr_2$ are positive operators, i.e., for $A \succ 0$, $\Tr_1(A) \succ 0$ and $\Tr_2(A) \succ 0$.
\end{prop}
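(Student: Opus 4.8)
The plan is to exhibit both partial traces as sums of congruence transformations $V^\top A V$ by full-column-rank matrices $V$: such transformations send $A \succ 0$ to a positive definite matrix (for $y \neq 0$, full column rank gives $Vy \neq 0$, so $y^\top V^\top A V y = (Vy)^\top A (Vy) > 0$), and positive definiteness is preserved under summation. The case of $\Tr_2$ is the easy warm-up. Each diagonal block is a principal submatrix of $A$; equivalently $A_{(ii)} = (e_i \kron I_{N_2})^\top A\,(e_i \kron I_{N_2})$, where $e_i$ is the $i$-th standard basis vector of $\reals^{N_1}$, and $e_i \kron I_{N_2}$ has full column rank $N_2$. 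Hence each $A_{(ii)} \succ 0$ when $A \succ 0$, and so is their sum $\Tr_2(A)$.

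The interesting case is $\Tr_1$. First I would establish the identity
\[ \Tr_1(A) = \sum_{k=1}^{N_2} (I_{N_1} \kron e_k)^\top A\,(I_{N_1} \kron e_k), \]
where now $e_k$ ranges over the standard basis of $\reals^{N_2}$. To verify it, note that the $(i,j)$ entry of the $k$-th summand is $(e_i \kron e_k)^\top A\,(e_j \kron e_k)$, which is precisely the $(k,k)$ diagonal entry of the block $A_{(ij)}$; summing over $k$ recovers $\Tr(A_{(ij)})$, matching the definition of $\Tr_1$. Since $I_{N_1} \kron e_k$ has full column rank $N_1$, each summand is positive definite, and therefore so is $\Tr_1(A)$. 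Equivalently, one can argue directly on the quadratic form: for any nonzero $x \in \reals^{N_1}$,
\[ x^\top \Tr_1(A)\,x = \Tr\Bigl(\sum\nolimits_{i,j} x_i x_j A_{(ij)}\Bigr) = \sum_{k=1}^{N_2} (x \kron e_k)^\top A\,(x \kron e_k), \]
and since $x \neq 0$ forces $x \kron e_k \neq 0$, every term on the right is strictly positive by $A \succ 0$, so $x^\top \Tr_1(A) x > 0$.

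I expect the only real obstacle to be bookkeeping: matching the block-index convention $A_{(ij)}$ to explicit Kronecker-indexed entries $A[(i-1)N_2 + a,\,(j-1)N_2 + b]$, and keeping the two standard bases (on $\reals^{N_1}$ for $\Tr_2$ versus $\reals^{N_2}$ for $\Tr_1$) straight. The conceptual content is elementary; the one point worth flagging is that $\Tr_1$, despite appearing to discard information by tracing out each block, is nonetheless a full-rank compression in the relevant sense and hence cannot destroy definiteness.
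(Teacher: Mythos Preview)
Your argument is correct. Writing each partial trace as a finite sum of congruences $V_k^\top A V_k$ with full-column-rank $V_k$ is exactly the right move, and your index bookkeeping checks out: $(I_{N_1}\kron e_k)e_j = e_j \kron e_k$, so the $(i,j)$ entry of the $k$-th summand is indeed the $(k,k)$ diagonal entry of $A_{(ij)}$.

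By way of comparison, the paper does not actually prove this proposition; it simply defers to Bhatia's book on positive definite matrices. What you have written is essentially the standard self-contained proof one finds there (the partial trace expressed in Kraus form as a sum of compressions $V_k^\top(\cdot)V_k$, hence a completely positive map). So your route is not different in spirit from the cited reference, but it is strictly more informative than the paper's own treatment, and it costs nothing extra. If anything, you could trim: once the identity $\Tr_1(A)=\sum_k (I_{N_1}\kron e_k)^\top A\,(I_{N_1}\kron e_k)$ is established, the direct quadratic-form computation at the end is redundant.
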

\begin{proof}
  Please refer to~\citep[Chap. 4]{bhatia07}.
\end{proof}

\section{Learning the kernel matrix for \krondpp}
\label{sec:learning}
In this section, we consider the key difficult task for \krondpp{}s: learning a Kronecker product kernel matrix from $n$ observed subsets $Y_1,\ldots,Y_n$. Using the definition~(\ref{eq:2}) of $\Pc(Y_i)$, maximum-likelihood learning of a DPP with kernel $L$ results in the optimization problem:
\begin{equation}
  \label{eq:10}
  \arg\max_{L \succ 0}\quad\phi(L), \qquad \phi(L) = \frac 1 n \sum_{i=1}^n \left(\log \det (L_{Y_i}) -\log\det(L+I)\right). 
\end{equation}
This problem is nonconvex and conjectured to be NP-hard~\citep[Conjecture 4.1]{kuleszaThesis}. Moreover the constraint $L\succ 0$ is nontrivial to handle. Writing $U_i$ as the indicator matrix for $Y_i$ of size $N \times |Y_i|$ so that $L_{Y_i} = U_i^\top L U_i$, the gradient of $\phi$ is easily seen to be
\begin{equation}
  \label{eq:delta}
  \Delta := \nabla \phi(L) = \frac 1 n \sum\nolimits_{i=1}^n U_i L_{Y_i}^{-1} U_i^\top - (L+I)^{-1}.
\end{equation} 
In~\citep{marietSra15}, the authors derived an iterative method (``the Picard iteration'') for computing an $L$ that solves $\Delta=0$ by running the simple iteration
\begin{equation}
  \label{eq:picard}
  L \leftarrow L + L\Delta L.
\end{equation}
Moreover, iteration~\eqref{eq:picard} is guaranteed to monotonically increase the log-likelihood $\phi$~\citep{marietSra15}. But these benefits accrue at a cost of $O(N^3)$ per iteration, and furthermore a direct application of~\eqref{eq:picard} cannot guarantee the Kronecker structure required by \krondpp.

\subsection{Optimization algorithm}
Our aim is to obtain an efficient algorithm to (locally) optimize~\eqref{eq:10}. Beyond its nonconvexity, the Kronecker structure $L=L_1\kron L_2$ imposes another constraint. As in~\citep{marietSra15} we first rewrite $\phi$ as a function of $S=L^{-1}$, and re-arrange terms to write it as
\begin{equation}
  \label{eq:9}
  \phi(S) = \underbrace{\vphantom{\sum\nolimits_a^b\frac 1n}\log\det(S)}_{f(S)} + \underbrace{\frac 1 n \sum\nolimits_{i=1}^n \log \det \left(U_i^\top S^{-1} U_i\right) - \log \det (I+S)}_{g(S)}.
\end{equation}
It is easy to see that $f$ is concave, while a short argument shows that $g$ is convex~\citep{marietSra15}. An appeal to the convex-concave procedure~\citep{yuille02} then shows that updating $S$ by solving $\nabla\!f(S^{(k+1)}) +\nabla\!g(S^{(k)})=0$, which is what ~\eqref{eq:picard} does~\citep[Thm. 2.2]{marietSra15}, is guaranteed to monotonically increase $\phi$. 

But for \krondpp this idea does not apply so easily: due the constraint $L = L_1 \kron L_2$ the function
\begin{equation*}
    g_{\tiny\kron} : (S_1,S_2) \rightarrow \tfrac 1 n \sum\nolimits_{i=1}^n \log \det \left(U_i^\top(S_1\kron S_2)^{-1} U_i\right) - \log \det (I+S_1 \kron S_2),
\end{equation*}
fails to be convex, precluding an easy generalization. Nevertheless, for fixed $S_1$ or $S_2$ the functions
\begin{equation*}
  \begin{cases}
    f_1 : S_1 \mapsto f(S_1 \kron S_2) \\
    g_1 : S_1 \mapsto g(S_1 \kron S_2) \\
  \end{cases}, \qquad
  \begin{cases}
    f_2 : S_2 \rightarrow f(S_1 \kron S_2) \\
    g_2 : S_2 \rightarrow g(S_1 \kron S_2) \\
  \end{cases}
\end{equation*}
are once again concave or convex. Indeed, the map $\kron: S_1 \rightarrow S_1 \kron S_2$ is linear and $f$ is concave, and $f_1 = f \circ \kron$ is also concave; similarly, $f_2$ is seen to be concave and $g_1$ and $g_2$ are convex. Hence, by generalizing the arguments of~\citep[Thm. 2]{yuille02} to our ``block-coordinate'' setting, updating via
\begin{equation}
  \label{eq:8}
  \nabla\!f_i\bigl({S_i}^{(k+1)}\bigr) = - \nabla\!g_i\bigl({S_i}^{(k)}\bigr),\quad\text{for}\ i=1,2,
\end{equation}
should increase the log-likelihood $\phi$ at each iteration. We prove below that this is indeed the case, and that updating as per~\eqref{eq:8} ensure positive definiteness of the iterates as well as monotonic ascent.

\subsubsection{Positive definite iterates and ascent} 
In order to show the positive definiteness of the solutions to \eqref{eq:8}, we first derive their closed form. 
\begin{prop}[Positive definite iterates]
  \label{prop:differenciation}
  For $S_1 \succ 0$, $S_2 \succ 0$, the solutions to \eqref{eq:8} are given by the following expressions:
  \begin{align*}
    \nabla\!f_1(X) = - \nabla\! g_1(S_1) &\iff X^{-1} = \Tr_1\!\left((I \kron S_2)(L + L \Delta L)\right)/N_2 \\
    \nabla\!f_2(X) = - \nabla\! g_2(S_2) &\iff X^{-1} = \Tr_2\left((S_1 \kron I) (L + L\Delta L)\right)/N_1.
  \end{align*}
  Moreover, these solutions are positive definite.
\end{prop}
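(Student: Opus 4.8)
The plan is to split the argument into two independent parts: first derive the stated closed forms for the solutions of~\eqref{eq:8}, and then establish their positive definiteness. I will carry out the derivation for the first equivalence (the one involving $\Tr_1$) in detail; the second is identical after swapping the roles of the two Kronecker factors. For the closed form I would compute the two gradients appearing in $\nabla\!f_1(X) = -\nabla\!g_1(S_1)$. The gradient of $f_1$ is immediate from the Kronecker determinant identity $\det(X\kron S_2) = \det(X)^{N_2}\det(S_2)^{N_1}$: since $f_1(X) = N_2\log\det(X) + N_1\log\det(S_2)$, we get $\nabla\!f_1(X) = N_2 X^{-1}$ (and likewise $\nabla\!f_2(X) = N_1 X^{-1}$).

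For $g_1$ the key observation is that $g_1 = g\circ\kron$ is the composition of $g$ with the \emph{linear} map $\kron: X\mapsto X\kron S_2$, so by the chain rule $\nabla\!g_1(S_1) = \kron^*\!\bigl(\nabla\!g(S_1\kron S_2)\bigr)$, where $\kron^*$ is the adjoint of $\kron$ for the Frobenius inner product. The crux is to identify this adjoint with a weighted partial trace: a short block computation shows that for any $M$ and symmetric $S_2$, $\langle M, A\kron S_2\rangle = \langle \Tr_1((I\kron S_2)M), A\rangle$, i.e. $\kron^*(M) = \Tr_1((I\kron S_2)M)$. It then remains to plug in $\nabla\!g(S)$ at $S = S_1\kron S_2$. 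Rather than differentiating $g$ afresh I would reuse the identity underlying the original Picard iteration, namely $-\nabla\!g(S) = L + L\Delta L$ with $L = S^{-1}$ and $\Delta$ as in~\eqref{eq:delta}; this follows from the parametrization chain rule $\nabla_L\phi = -S(\nabla_S\phi)S$ together with $\nabla_S f(S) = S^{-1} = L$. Combining the pieces gives $-\nabla\!g_1(S_1) = \Tr_1\bigl((I\kron S_2)(L+L\Delta L)\bigr)$, and setting this equal to $\nabla\!f_1(X) = N_2 X^{-1}$ yields exactly $X^{-1} = \Tr_1((I\kron S_2)(L+L\Delta L))/N_2$. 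Since $X\mapsto N_2 X^{-1}$ is a bijection of the positive definite cone, the stated equivalence is genuine once the right-hand side is shown to be positive definite.

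For positive definiteness I would proceed in two steps. First, $L + L\Delta L \succ 0$: substituting $\Delta$ and simplifying gives $L + L\Delta L = \tfrac1n\sum_i (LU_i)L_{Y_i}^{-1}(LU_i)^\top + L(L+I)^{-1}$, a sum of positive semidefinite terms and the positive definite term $L(L+I)^{-1}$, hence $\succ 0$. Second, I must show $\Tr_1\bigl((I\kron S_2)M\bigr)\succ 0$ whenever $M\succ 0$ and $S_2\succ 0$. The subtlety is that $(I\kron S_2)M$ is not symmetric, so Prop.~\ref{prop:posdef-operator} does not apply verbatim; the expression is nonetheless symmetric after taking the partial trace (using $M_{(ji)} = M_{(ij)}^\top$ and $S_2 = S_2^\top$). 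To get definiteness I would test against an arbitrary $v\neq 0$ and reduce to the scalar positivity $v^\top\Tr_1((I\kron S_2)M)v = \Tr\bigl(S_2\,(v^\top\kron I)M(v\kron I)\bigr) > 0$, which holds because $(v^\top\kron I)M(v\kron I)\succ 0$ (a congruence of $M\succ 0$ by the full-column-rank matrix $v\kron I$) and the trace of a product of two positive definite matrices is positive.

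I expect the main obstacle to be the identification of the adjoint $\kron^*$ with the weighted partial trace and, relatedly, the positive definiteness of the non-symmetric product $(I\kron S_2)(L+L\Delta L)$ under $\Tr_1$; the congruence-plus-trace reduction above is precisely what lets us invoke the positivity of Prop.~\ref{prop:posdef-operator} in this weighted setting. The remaining ingredients—the two gradient computations and the algebraic simplification of $L+L\Delta L$—are routine, and the $\Tr_2$ statement is obtained verbatim by exchanging the two tensor factors.
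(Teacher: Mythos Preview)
Your proposal is correct and follows essentially the same route as the paper. For the closed form, the paper computes the chain rule entrywise via the Jacobian $J = (\vect(E_{11}\kron S_2),\ldots,\vect(E_{N_1N_1}\kron S_2))$ and then simplifies $\vect(E_{ij}\kron S_2)^\top\vect(L+L\Delta L)=\Tr\bigl(((I\kron S_2)(L+L\Delta L))_{(ij)}\bigr)$; your adjoint formulation $\kron^*(M)=\Tr_1((I\kron S_2)M)$ is exactly this computation repackaged in coordinate-free language, and the identification $-\nabla g(S)=L+L\Delta L$ is also what the paper uses. For positive definiteness, the paper's argument simply notes $L+L\Delta L\succeq 0$ (via $L-L(I+L)^{-1}L\succ 0$) and then appeals directly to the positivity of the partial trace (Prop.~\ref{prop:posdef-operator}); your observation that $(I\kron S_2)(L+L\Delta L)$ is not symmetric, together with the congruence-plus-trace reduction $v^\top\Tr_1((I\kron S_2)M)v=\Tr\bigl(S_2\,(v^\top\kron I)M(v\kron I)\bigr)$, is a genuine clarification that fills a small gap the paper leaves implicit---an equivalent fix is to rewrite $\Tr_1((I\kron S_2)M)=\Tr_1\bigl((I\kron S_2^{1/2})M(I\kron S_2^{1/2})\bigr)$ and then invoke Prop.~\ref{prop:posdef-operator} on the symmetrized congruence.
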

\begin{proof}
  The details are somewhat technical, and are hence given in Appendix~\ref{app:cccp-psd}. We know that $L \succ 0 \implies L + L \Delta L \geq 0$, because $L - L( I+L)^{-1}L \succ 0$. Since the partial trace operators are positive (Prop.~\ref{prop:posdef-operator}), it follows that the solutions to ~\eqref{eq:8} are also positive definite.  
\end{proof}

We are now ready to establish that these updates ensure monotonic ascent in the log-likelihood.
\begin{theorem}[Ascent]
  \label{thm:cccp}
  Starting with $L_1^{(0)} \succ 0$, $L_2^{(0)} \succ 0$, updating according to ~\eqref{eq:8} generates positive definite iterates $L_1^{(k)}$ and $L_2^{(k)}$, and the sequence $\bigl\{\phi\bigl(L_1^{(k)} \kron L_2^{(k)}\bigr)\bigr\}_{k \geq 0}$ is non-decreasing.
\end{theorem}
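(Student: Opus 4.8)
The plan is to split the statement into two independent claims—positive definiteness of the iterates and monotonicity of $\phi$—and handle them in that order. For positive definiteness I would argue by induction on $k$. Assuming $L_1^{(k)} \succ 0$ and $L_2^{(k)} \succ 0$, the matrices $S_i^{(k)} = (L_i^{(k)})^{-1}$ are positive definite, so Proposition~\ref{prop:differenciation} applies and gives each update of~\eqref{eq:8} in closed form as (a rescaling of) a partial trace of $L + L\Delta L$. Since $L \succ 0$ implies $L + L\Delta L \succeq 0$ and the partial traces are positive operators (Proposition~\ref{prop:posdef-operator}), the solution $X$ satisfies $X^{-1} \succ 0$; hence $S_i^{(k+1)} = X \succ 0$ and equivalently $L_i^{(k+1)} = X^{-1} \succ 0$, which closes the induction with base case supplied by the hypotheses on $L_1^{(0)}, L_2^{(0)}$.

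For the ascent claim I would run a block-coordinate minorize-maximize (convex-concave) argument, one sub-kernel at a time. Fix $S_2 = S_2^{(k)}$ and view the objective as $S_1 \mapsto f_1(S_1) + g_1(S_1)$ with $f_1$ concave and $g_1$ convex. Because $g_1$ is convex it dominates its own tangent, so the surrogate $h_1(S_1) := f_1(S_1) + g_1(S_1^{(k)}) + \langle \nabla g_1(S_1^{(k)}), S_1 - S_1^{(k)} \rangle$ (with the trace inner product) lies below $\phi(\,\cdot\, \kron S_2^{(k)})$ everywhere and touches it at $S_1^{(k)}$. The surrogate is strictly concave—indeed $f_1(S_1) = N_2 \log\det(S_1) + \mathrm{const}$ up to the fixed term in $S_2$—so it has a unique maximizer characterized by $\nabla f_1(S_1^{(k+1)}) = -\nabla g_1(S_1^{(k)})$, which is precisely the first equation of~\eqref{eq:8} and whose solution is the positive definite matrix of Proposition~\ref{prop:differenciation}. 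The minorization chain $\phi(S_1^{(k+1)} \kron S_2^{(k)}) \geq h_1(S_1^{(k+1)}) \geq h_1(S_1^{(k)}) = \phi(S_1^{(k)} \kron S_2^{(k)})$ then gives ascent for the first block; the identical argument applied to $S_2$ (fixing $S_1 = S_1^{(k+1)}$) gives $\phi(S_1^{(k+1)} \kron S_2^{(k+1)}) \geq \phi(S_1^{(k+1)} \kron S_2^{(k)})$. Chaining these, and using $S_1 \kron S_2 = (L_1 \kron L_2)^{-1}$ from Proposition~\ref{prop:basic}(ii) to pass between the $S$- and $L$-parametrizations, yields the claimed non-decreasing sequence $\{\phi(L_1^{(k)} \kron L_2^{(k)})\}_{k \geq 0}$.

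The main obstacle I anticipate is not any individual inequality but making the block-coordinate convex-concave procedure rigorous as a whole. One must confirm that for each fixed block the splitting into a concave $f_i$ and a convex $g_i$ truly holds (this follows from linearity of the map $S_i \mapsto S_1 \kron S_2$ composed with the known concavity of $f$ and convexity of $g$), and—more delicately—that the surrogate's maximizer exists inside the open positive definite cone, so that the first-order condition~\eqref{eq:8} is both necessary and sufficient rather than a boundary artifact. This is exactly where Proposition~\ref{prop:differenciation} is essential: its positive-definiteness guarantee keeps both coordinate updates in the interior and certifies that the stationary point is the genuine maximizer of the strictly concave surrogate. Carefully threading this through the two sequential updates, and keeping the $S \leftrightarrow L$ reparametrization consistent, is where the real care lies.
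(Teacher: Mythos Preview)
Your proposal is correct and follows essentially the same approach as the paper: positive definiteness via Proposition~\ref{prop:differenciation} and the positivity of partial traces, and ascent via a block-coordinate convex--concave argument. The paper writes the latter by directly adding the concavity inequality for $f_1$ to the convexity inequality for $g_1$ and observing that the cross term vanishes when~\eqref{eq:8} holds, which is exactly your minorize--maximize surrogate argument unpacked; your extra care about the surrogate's maximizer existing in the open positive definite cone (secured by Proposition~\ref{prop:differenciation}) is a point the paper leaves implicit.
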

\begin{proof}
  Updating according to ~\eqref{eq:8} generates positive definite matrices $S_i$, and hence positive definite subkernels $L_i=S_i$. Moreover, due to the convexity of $g_1$ and concavity of $f_1$, for matrices $A,B \succ 0$
  \begin{equation*}
    \begin{aligned}
      f_1(B) \leq f_1(A) + \nabla\!f_1(A)^\top(B-A), \\
      g_1(A) \geq g_1(B) + \nabla\!g_1(B)^\top(A-B).      
    \end{aligned}
  \end{equation*}
  Hence, $f_1(A) + g_1(A) \geq f_1(B) + g_1(B) + (\nabla\!f_1(A) + \nabla\!g_1(B))^\top(A-B)$. 

Thus, if $S_1^{(k)}, S_1^{(k+1)}$ verify \eqref{eq:8}, by setting $A = S_1^{(k+1)}$ and $B=S_1^{(k)}$ we have \[\phi\bigl(L_1^{(k+1)} \kron L_2^{(k)}\bigr) = f_1\!\bigl(S_1^{(k+1)}\bigr) + g_1\!\bigl(S_1^{(k+1)}\bigr) \geq f_1\!\bigl(S_1^{(k)}\bigr) + g_1\!\bigl(S_1^{(k)}\bigr) = \phi\bigl(L_1^{(k)} \kron L_2^{(k)}\bigr).\]
The same reasoning holds for $L_2$, which proves the theorem.
\end{proof}

As $\Tr_1\!\left((I \kron S_2)L\right) = N_2 L_1$ (and similarly for $L_2$), updating as in \eqref{eq:8} is equivalent to updating
\begin{equation*}
    L_1 \leftarrow L_1 + \Tr_1\!\left((I \kron L^{-1}_2)(L \Delta L)\right)/N_2,\qquad
    L_2 \leftarrow L_2 + \Tr_2\left((L^{-1}_1 \kron I) (L\Delta L)\right)/N_1.
\end{equation*}

\textbf{Genearlization.}
We can generalize the updates to take an additional step-size parameter $a$:
\begin{equation*}
  \label{eq:12}
    L_1 \leftarrow L_1 + a \Tr_1\!\left((I \kron L^{-1}_2)(L \Delta L)\right)/N_2,\qquad
    L_2 \leftarrow L_2 + a \Tr_2\left((L^{-1}_1 \kron I) (L\Delta L)\right)/N_1.
\end{equation*}
Experimentally, $a>1$ (as long as the updates remain positive definite) can provide faster convergence, although the monotonicity of the log-likelihood is no longer guaranteed. We found experimentally that the range of admissible $a$ is larger than for Picard, but decreases as $N$ grows larger.

The arguments above easily generalize to the multiblock case. Thus, when learning $L=L_1\kron \cdots \kron L_m$, by writing $E_{ij}$ the matrix with a 1 in position $(i,j)$ and zeros elsewhere, we update $L_k$ as \[(L_k)_{ij} \leftarrow (L_k)_{ij} + N_k/(N_1 \ldots N_m)\Tr\left[(L_1 \kron \ldots \kron L_{k-1} \kron E_{ij} \kron L_{k+1} \kron \ldots \kron L_m )(L \Delta L)\right].\]

From the above updates it is not transparent whether the Kronecker product saves us any computation. In particular, it is not clear whether the updates can be implemented to run faster than $O(N^3)$. We show below in the next section how to implement these updates efficiently.

\subsubsection{Algorithm and complexity analysis}
From Theorem~\ref{thm:cccp}, we obtain Algorithm~\ref{algo:cccp} (which is different from the Picard iteration in~\citep{marietSra15}, because it operates alternatingly on each subkernel). It is important to note that a further speedup to Algorithm~\ref{algo:cccp} can be obtained by performing stochastic updates, i.e., instead of computing the full gradient of the log-likelihood, we perform our updates using only one (or a small minibatch) subset $Y_i$ at each step instead of iterating over the entire training set; this uses the stochastic gradient $\Delta = U_i L_{Y_i}^{-1} U_i^\top - (I+L)^{-1}$. 
\begin{algorithm}[ht]\small
   \caption{\algoa iteration}
   \label{algo:cccp}
\begin{algorithmic}
   \STATE {\bfseries Input:} Matrices $L_1, L_2$, training set $T$, parameter $a$.
   \FOR{$i=1$ {\bfseries to} maxIter}
   \STATE $L_1 \leftarrow L_1 + a \Tr_1\!\left((I \kron L^{-1}_2)(L \Delta L)\right)/N_2$ \qquad// or update stochastically
   \STATE $L_2 \leftarrow L_2 + a \Tr_2\left((L^{-1}_1 \kron I) (L\Delta L)\right)/N_1$\qquad// or update stochastically
   \ENDFOR

   \textbf{return} $(L_1,L_2)$
\end{algorithmic}
\end{algorithm}
The crucial strength of Algorithm~\ref{algo:cccp} lies in the following result:

\begin{theorem}[Complexity]
  \label{thm:complexity}
  For $N_1 \approx N_2 \approx \sqrt N$, the updates in Algorithm~\ref{algo:cccp} can be computed in $\mathcal O(n\kappa^3 + N^2)$ time and $\mathcal O(N^2)$ space, where $\kappa$ is the size of the largest training subset. Furthermore, stochastic updates can be computed in $\mathcal O(N\kappa^2 + N^{3/2})$ time and $\mathcal O(N + \kappa^2)$ space.
\end{theorem}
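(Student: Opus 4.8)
The plan is to account for the cost of the update $L_1 \leftarrow L_1 + a\,\Tr_1\!\bigl((I\kron L_2^{-1})(L\Delta L)\bigr)/N_2$ (and its symmetric counterpart for $L_2$) while organizing every computation so that no dense $N\times N$ matrix is ever formed, inverted, or multiplied. First I would split the gradient as $\Delta = \Delta_{\mathrm{data}} - (L+I)^{-1}$ with $\Delta_{\mathrm{data}}=\tfrac1n\sum_i U_i L_{Y_i}^{-1}U_i^\top$, so that $L\Delta L = L\Delta_{\mathrm{data}}L - L(L+I)^{-1}L$, and bound the two pieces separately. The $n$ inversions $L_{Y_i}^{-1}$ cost $\mathcal O(\kappa^3)$ apiece and account for the $\mathcal O(n\kappa^3)$ term; in the stochastic variant only one such inverse is needed, and its $\mathcal O(\kappa^3)$ cost is absorbed into $\mathcal O(N\kappa^2)$ since $\kappa\le N$.

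For the model piece I would invoke Corollary~\ref{corr:eigendecompose}: eigendecomposing the sub-kernels $L_1=P_1D_1P_1^\top$ and $L_2=P_2D_2P_2^\top$ (cost $\mathcal O(N^{3/2})$) simultaneously diagonalizes $L$, $L+I$, and hence $L(L+I)^{-1}L=(P_1\kron P_2)\Lambda(P_1\kron P_2)^\top$ with $\Lambda$ diagonal. Since $(I\kron L_2^{-1})(P_1\kron P_2)=(P_1\kron P_2)(I\kron D_2^{-1})$, prepending $(I\kron L_2^{-1})$ merely rescales this diagonal, so I must take the partial trace of a matrix $(P_1\kron P_2)\Lambda'(P_1\kron P_2)^\top$. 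The workhorse throughout is the reshaping identity $\Tr_1(vw^\top)=\hat V\hat W^\top$, where $\hat V,\hat W\in\reals^{N_1\times N_2}$ are the matricizations of $v,w\in\reals^{N_1N_2}$; applied to the orthonormal Kronecker eigenvectors $p_1^{(a)}\kron p_2^{(b)}$, whose matricizations $p_1^{(a)}(p_2^{(b)})^\top$ are rank one, it collapses the model partial trace to $P_1\,\mathrm{diag}(c)\,P_1^\top$ with $c_a=\sum_b\Lambda'_{(a,b)}$, computable in $\mathcal O(N^{3/2})$.

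For the data piece I would use $(I\kron L_2^{-1})L=L_1\kron I$ to write its contribution as $\tfrac1n\sum_i \Tr_1\!\bigl((L_1\kron I)U_iL_{Y_i}^{-1}U_i^\top L\bigr)$, a sum of rank-$\le\kappa$ terms. Writing each element of $Y_i$ through its Kronecker coordinates $(a,b)$, the columns of $(L_1\kron I)U_i$ matricize to $(L_1)_{:,a_p}e_{b_p}^\top$ and the columns of $LU_i$ matricize to the rank-one $(L_1)_{:,a_q}(L_2)_{:,b_q}^\top$, so the reshaping identity yields $\hat A_p\hat B_q^\top=(L_2)_{b_p,b_q}(L_1)_{:,a_p}(L_1)_{:,a_q}^\top$. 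Contracting these against $(L_{Y_i}^{-1})_{pq}$ amounts to scattering the $\kappa\times\kappa$ Hadamard product of $L_{Y_i}^{-1}$ with the submatrix of $L_2$ on the second coordinates of $Y_i$ into an $N_1\times N_1$ accumulator $\Gamma$ (cost $\mathcal O(n\kappa^2)$), after which the entire data contribution is the single product $L_1\Gamma L_1^\top$ (cost $\mathcal O(N^{3/2})$). Collecting the two pieces, and doing the symmetric bookkeeping for the $L_2$ update via $\Tr_2$, gives the batch bound $\mathcal O(n\kappa^3+N^2)$ time (the structured steps are in fact $\mathcal O(N^{3/2})$) and $\mathcal O(N^2)$ space if $L$ and the working matrices are kept dense; processing a single subset and keeping everything in factored form yields $\mathcal O(N\kappa^2+N^{3/2})$ time and $\mathcal O(N+\kappa^2)$ space for the stochastic update.

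The main obstacle, and the crux of the whole argument, is precisely the avoidance of the naive $\mathcal O(N^3)$ cost: both $L(L+I)^{-1}L$ and the per-subset products $LU_iL_{Y_i}^{-1}U_i^\top L$ are dense $N\times N$ objects, and materializing either, or forming $(L+I)^{-1}$ directly, already costs $\mathcal O(N^3)$. Everything hinges on never leaving the factored representation: diagonalizing the model term through Corollary~\ref{corr:eigendecompose}, and reducing every partial trace to small $N_1\times N_2$ or $\kappa\times\kappa$ matrix products via the matricization identity. For the stochastic bound the additional difficulty is the $\mathcal O(N+\kappa^2)$ space budget, which forbids storing even a single dense $N\times N$ matrix; one must therefore accumulate the partial traces on the fly from the eigendecompositions and the rank-one reshapings rather than from any explicitly formed intermediate.
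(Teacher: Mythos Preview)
Your argument is correct and structurally parallel to the paper's proof in Appendix~\ref{app:cccp-complexity}: both split $\Delta$ into the data part $\Theta=\tfrac1n\sum_iU_iL_{Y_i}^{-1}U_i^\top$ and the model part $(I+L)^{-1}$, and both handle the model contribution by simultaneously diagonalizing $L$, $I+L$, and the premultiplier via the eigendecompositions of $L_1$ and $L_2$, arriving at an $N_1\times N_1$ matrix of the form $P_1\,\mathrm{diag}(c)\,P_1^\top$ computable in $\mathcal O(N^{3/2})$.

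The genuine difference is your treatment of the data term. The paper first accumulates $\Theta$ and then forms the $N_1\times N_1$ matrix $A$ with $A_{k\ell}=\Tr(\Theta_{(k\ell)}L_2)$ by brute-force traces over the $N_1^2$ blocks; this step is what produces the $\mathcal O(N^2)$ time and space terms in the statement. Your matricization identity $\Tr_1(vw^\top)=\hat V\hat W^\top$ (and its $\Tr_2$ counterpart) lets you bypass $\Theta$ entirely: you scatter the $\kappa\times\kappa$ Hadamard products $(L_{Y_i}^{-1})\odot (L_2)_{b(Y_i),b(Y_i)}$ directly into the $N_1\times N_1$ accumulator $\Gamma$ in $\mathcal O(n\kappa^2)$ and then form $L_1\Gamma L_1^\top$ in $\mathcal O(N^{3/2})$. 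A short check shows your $\Gamma$ equals the paper's $A$, so the two routes compute the same object. What your route buys is a strictly sharper bound: the batch update actually runs in $\mathcal O(n\kappa^3+N^{3/2})$ time and $\mathcal O(N+\kappa^2)$ space, not merely the $\mathcal O(n\kappa^3+N^2)$ and $\mathcal O(N^2)$ stated in the theorem, and the stochastic bound you recover is likewise looser than what your argument gives. The paper's block-index bookkeeping is more elementary but leaves the $N^2$ in place; your reshaping identity is the cleaner and tighter device.
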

Indeed, by leveraging the properties of the Kronecker product, the updates can be obtained without computing $L\Delta L$. This result is non-trivial: the components of $\Delta$, $\frac 1 n \sum_i U_i L_{Y_i}^{-1}U_i^\top$ and $(I+L)^{-1}$, must be considered separately for computational efficiency. The proof is provided in App.~\ref{app:cccp-complexity}. However, it seems that considering more than 2 subkernels does not lead to further speed-ups.

If $N_1 \approx N_2 \approx \sqrt N$, these complexities become:
\begin{itemize}
  \setlength\itemsep{1pt}
  \item for non-stochastic updates: $\mathcal O(n\kappa^3 + N^2)$ time, $\mathcal O(N^2)$ space,
  \item for stochastic updates: $\mathcal O(N\kappa^3 + N^{3/2})$ time, $\mathcal O(\kappa^2 + N)$ space.
\end{itemize}

This is a marked improvement over~\citep{marietSra15}, which runs in $\mathcal O(N^2)$ space and $\mathcal O(n\kappa^3 + N^3)$ time (non-stochastic) or $\mathcal O(N^3)$ time (stochastic); Algorithm~\ref{algo:cccp} also provides faster stochastic updates than~\citep{gartrell16}. However, one may wonder if by learning the sub-kernels by alternating updates the log-likelihood converges to a sub-optimal limit. The next section discusses how to jointly update $L_1$ and $L_2$.

\subsection{Joint updates}
We also analyzed the possibility of updating $L_1$ and $L_2$ jointly: we update $L \leftarrow L + L \Delta L$ and then recover the Kronecker structure of the kernel by defining the updates $L_1'$ and $L_2'$ such that:
\begin{equation}
  \label{eq:4}
  \begin{cases}
    (L_1',L_2') \text{ minimizes } \|L + L \Delta L - L_1' \kron L_2'\|^2_F \\
    L_1' \succ 0, L_2' \succ 0, \|L_1'\| = \|L_2'\|
  \end{cases}
\end{equation}
We show in appendix~\ref{app:joint-updates} that such solutions exist and can be computed by from the first singular value and vectors of the matrix $R = \left[\vect((L^{-1} + \Delta)_{(ij)})^\top\right]_{i,j=1}^{N_1}$. Note however that in this case, there is no guaranteed increase in log-likelihood. The pseudocode for the related algorithm (\algob) is given in appendix~\ref{app:joint-pseudocode}. An analysis similar to the proof of Thm.~\ref{thm:complexity} shows that the updates can be obtained $\mathcal O(n \kappa^3 + \max(N_1,N_2)^4)$.

\subsection{Memory-time trade-off}
Although \krondpps have tractable learning algorithms, the memory requirements remain high for non-stochastic updates, as the matrix $\Theta = \frac 1 n \sum_i U_i L^{-1}_{Y_i} U_i^\top$ needs to be stored, requiring $\mathcal O(N^2)$ memory. However, if the training set can be subdivised such that 
\begin{equation}
  \label{eq:6}
\{Y_1, \ldots, Y_n\} = \cup_{k=1}^m S_k \quad s.t. \quad \forall k, \left\vert \cup_{Y \in S_k} Y \right\vert < z,
\end{equation}
$\Theta$ can be decomposed as $\frac 1 n \sum_{k=1}^m \Theta_k$ with $\Theta_k = \sum_{Y_i \in S_k} U_i L_{Y_i}^{-1} U_i^\top$. Due to the bound in Eq.~\ref{eq:6}, each $\Theta_k$ will be sparse, with only $z^2$ non-zero coefficients. We can then store each $\Theta_k$ with minimal storage and update $L_1$ and $L_2$ in $\mathcal O(n\kappa^3 + mz^2 + N^{3/2})$ time and $\mathcal O(mz^2 + N)$ space. 

Determining the existence of such a partition of size $m$ is a variant of the NP-Hard Subset-Union Knapsack Problem (SUKP)~\citep{goldschmidt94} with $m$ knapsacks and where the value of each item (i.e. each $Y_i$) is equal to 1: a solution to SUKP of value $n$ with $m$ knapsacks is equivalent to a solution to~\ref{eq:6}. However, an approximate partition can also be simply constructed via a greedy algorithm.

\section{Sampling}
\label{sec:sampling}
\vspace*{-4pt}
Sampling exactly (see Alg.~\ref{algo:sampling} and~\citep{kuleszaBook}) from a full DPP kernel costs $\mathcal O(N^3 + Nk^3)$ where $k$ is the size of the sampled subset. The bulk of the computation lies in the initial eigendecomposition of $L$; the $k$ orthonormalizations cost $\mathcal O(Nk^3)$. Although the eigendecomposition need only happen once for many iterations of sampling, exact sampling is nonetheless intractable in practice for large $N$.

\begin{algorithm}[ht]\small
   \caption{Sampling from a DPP kernel $L$}
   \label{algo:sampling}
\begin{algorithmic}
   \STATE {\bfseries Input:} Matrix $L$.
   \STATE Eigendecompose $L$ as $\{(\lambda_i,v_i)\}_{1 \leq i \leq N}$.
   \STATE $J \leftarrow \emptyset$
   \FOR{$i=1$ {\bfseries to} $N$}
   \STATE $J \rightarrow J \cup \{i\}$ with probability $\lambda_i / (\lambda_i + 1)$.
   \ENDFOR
   \STATE $V \leftarrow \{v_i\}_{i \in J}$, $Y \leftarrow \emptyset$
   \WHILE{$\vert V \vert> 0$}
   \STATE Sample $i$ from $\{1 \ldots N\}$ with probability $\frac 1{\vert V\vert} \sum_{v \in V} v_i^2$
   \STATE $Y \leftarrow Y \cup \{i\}$,  $V \leftarrow V_\bot$, where $V_\bot$ is an orthonormal basis of the subspace of $V$ orthonormal to $e_i$
   \ENDWHILE
   
\textbf{return} $Y$
\end{algorithmic}
\end{algorithm}

It follows from Prop.~\ref{corr:eigendecompose} that for \krondpps, the eigenvalues $\lambda_i$ can be obtained in $\mathcal O(N_1^3 + N_2^3)$, and the $k$ eigenvectors in $\mathcal O(kN)$ operations. For $N_1 \approx N_2 \approx \sqrt N$,  exact sampling thus only costs $\mathcal O(N^{3/2}+Nk^3)$. If $L = L_1 \kron L_2 \kron L_3$, the same reasoning shows that exact sampling becomes linear in $N$, only requiring  $\mathcal O(Nk^3)$ operations. 

One can also resort to MCMC sampling; for instance such a sampler was considered in~\citep{kang13} (though with an incorrect mixing time analysis). The results of~\citep{li16} hold only for $k$-DPPs, but suggest their MCMC sampler may possibly take $O(N^2\log(N/\epsilon))$ time for full DPPs, which is impractical. Nevertheless if one develops faster MCMC samplers, they should also be able to profit from the Kronecker product structure offered by \krondpp.

\vspace*{-8pt}
\section{Experimental results}
\label{sec:experiments}
\vspace*{-4pt}
In order to validate our learning algorithm, we compared \algoa to \algob and to the Picard iteration (\picard) on multiple real and synthetic datasets.\footnote{All experiments were repeated 5 times and averaged, using MATLAB on a Linux Mint system with 16GB of RAM and an i7-4710HQ CPU @ 2.50GHz.} 

\vspace*{-4pt}
\subsection{Synthetic tests}
\vspace*{-4pt}
All three algorithms were used to learn from synthetic data drawn from a ``true'' kernel. The sub-kernels were initialized by $L_i = X^\top X$, with $X$'s coefficients drawn uniformly from $[0,\sqrt 2]$; for \picard, $L$ was initialized with $L_1 \kron L_2$. 
\begin{figure}[!h]
  \centering
  \begin{subfigure}{\linewidth}
    \centering
    \includegraphics{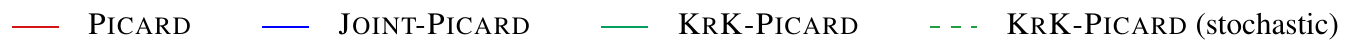}
  \end{subfigure}
  \begin{subfigure}{1.28\figWsynthleft}
    \centering
    \includegraphics{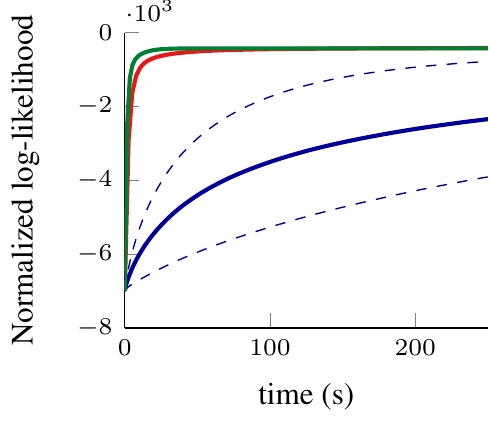}
    \caption{$N_1=N_2=50$}
    \label{fig:synth-small}
  \end{subfigure}
  \begin{subfigure}{1.15\figWsynthright}
    \centering
    \includegraphics{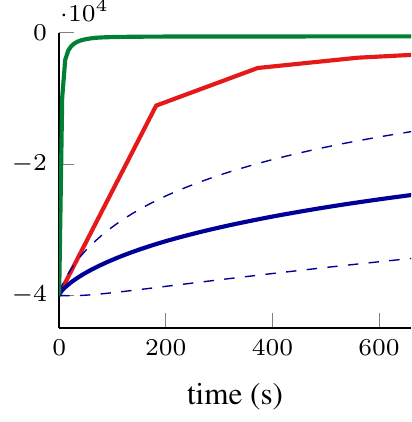}
    \caption{$N_1=N_2=100$}
    \label{fig:synth-large}
  \end{subfigure}
  \begin{subfigure}{1.15\figWsynthright}
    \centering
    \includegraphics{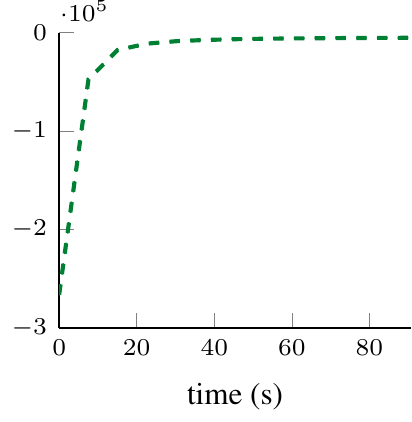}
    \caption{$N_1=100, N_2=500$}
    \label{fig:synth-very-large}
  \end{subfigure}
  \caption{$a=1$; the thin dotted lines indicated the standard deviation from the mean.}
  \label{fig:synthetic}
\end{figure}
For Figures~\ref{fig:synth-small} and~\ref{fig:synth-large}, training data was generated by sampling 100 subsets from the true kernel with sizes uniformly distributed between 10 and 190. 

To evaluate \algoa on matrices too large to fit in memory and with large $\kappa$, we drew samples from a $50\cdot 10^3\!\times\!50\cdot 10^3$ kernel of rank $1,000$ (on average $|Y_i| \approx 1,000$), and learned the kernel stochastically (only \algoa could be run due to the memory requirements of other methods); the likelihood drastically improves in only two steps (Fig.\ref{fig:synth-very-large}).

As shown in Figures~\ref{fig:synth-small} and~\ref{fig:synth-large}, \algoa converges significantly faster than \picard, especially for large values of $N$. However, although \algob also increases the log-likelihood at each iteration, it converges much slower and has a high standard deviation, whereas the standard deviations for \picard and \algoa are barely noticeable. For these reasons, we drop the comparison to \algob in the subsequent experiments.

\subsection{Small-scale real data: baby registries}
We compared \algoa to \picard and EM~\citep{gillenwater14} on the baby registry dataset (described in-depth in~\citep{gillenwater14}), which has also been used to evaluate other DPP learning algorithms~\citep{gillenwater14,marietSra15,gartrell16}. The dataset contains 17 categories of baby-related products obtained from Amazon. We learned kernels for the 6 largest categories ($N=100$); in this case, \picard is sufficiently efficient to be prefered to \algoa; this comparison serves only to evaluate the quality of the final kernel estimates.

The initial marginal kernel $K$ for EM was sampled from a Wishart distribution with $N$ degrees of freedom and an identity covariance matrix, then scaled by $1/N$; for \picard, $L$ was set to $K(I-K)^{-1}$; for \algoa, $L_1$ and $L_2$ were chosen (as in \algob) by minimizing $\|L - L_1 \kron L_2\|$. Convergence was determined when the objective change dipped below a threshold $\delta$. As one EM iteration takes longer than one Picard iteration but increases the likelihood more, we set $\delta_{\textsc{Pic}} = \delta_{\textsc{KrK}} = 10^{-4}$ and $\delta_{\textsc{EM}} = 10^{-5}$.

The final log-likelihoods are shown in Table~\ref{tab:babies}; we set the step-sizes to their largest possible values, i.e. $a_{\textsc{Pic}} = 1.3$ and $a_{\textsc{KrK}} = 1.8$. Table~\ref{tab:babies} shows that \algoa obtains comparable, albeit slightly worse log-likelihoods than \picard and \textsc{EM}, which confirms that for tractable $N$, the better modeling capability of full kernels make them preferable to \krondpps.

\begin{table}[t]
  \caption{Final log-likelihoods for each large category of the baby registries dataset}
    \begin{subtable}{.5\linewidth}
      \centering
      \caption{Training set}
      \small
      \begin{tabular}{lccc}
        \toprule
        Category & \textsc{EM}      & \picard      & \algoa        \\
        \midrule
        apparel & -10.1 & -10.2 & -10.7 \\ 
        bath & -8.6 & -8.8 & -9.1 \\ 
        bedding & -8.7 & -8.8 & -9.3 \\ 
        diaper & -10.5 & -10.7 & -11.1 \\ 
        feeding & -12.1 & -12.1 & -12.5 \\ 
        gear & -9.3 & -9.3 & -9.6 \\ 
        \bottomrule
      \end{tabular}
    \end{subtable}
    \begin{subtable}{.5\linewidth}
      \centering
      \caption{Test set}
      \small
      \begin{tabular}{lccc}
        \toprule
        Category & \textsc{EM}      & \picard      & \algoa        \\
        \midrule
        apparel & -10.1 & -10.2 & -10.7 \\ 
        bath & -8.6 & -8.8 & -9.1 \\ 
        bedding & -8.7 & -8.8 & -9.3 \\ 
        diaper & -10.6 & -10.7 & -11.2 \\ 
        feeding & -12.2 & -12.2 & -12.6 \\ 
        gear & -9.2 & -9.2 & -9.5 \\ 
        \bottomrule
      \end{tabular}
    \end{subtable}
  \label{tab:babies}
\end{table}

\subsection{Large-scale real dataset: GENES}
Finally, to evaluate \algoa on large matrices of real-world data, we train it on data from the GENES~\citep{batmanghelich14} dataset (which has also been used to evaluate DPPs in~\citep{li15,batmanghelich14}). This dataset consists in 10,000 genes, each represented by 331 features corresponding to the distance of a gene to hubs in the BioGRID gene interaction network. 

\begin{table}[b]
  \caption{Average runtime and performance on the GENES dataset for $N_1 = N_2 = 100$}
  \centering \small
  \begin{tabular}{cccc}
    \toprule
    ~ & \picard & \algoa & \algoa (stochastic) \\
    \midrule
    Average runtime & 161.5 $\pm$ 17.7 s & 8.9 $\pm$ 0.2 s& 1.2 $\pm$ 0.02 s\\
    NLL increase (1st iter.) & $(2.81 \pm 0.03) \cdot 10^4$ & $(2.96 \pm 0.02) \cdot 10^4$ & $(3.13 \pm 0.04) \cdot 10^4$ \\
    \bottomrule
  \end{tabular}
  \label{tab:runtimes}
\end{table}

We construct a ground truth Gaussian DPP kernel on the GENES dataset and use it to obtain 100 training samples with sizes uniformly distributed between 50 and 200 items. Similarly to the synthetic experiments, we initialized \algoa's kernel by setting $L_i = X_i^\top X_i$ where $X_i$ is a random matrix of size $N_1 \times N_1$; for \picard, we set the initial kernel $L=L_1 \kron L_2$. 

\begin{figure}[!h]
  \centering
  \begin{subfigure}{\linewidth}
    \centering
    \includegraphics{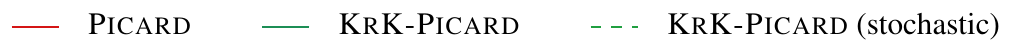}
  \end{subfigure}
  \begin{subfigure}{1.3\figWgenesleft}
    \includegraphics{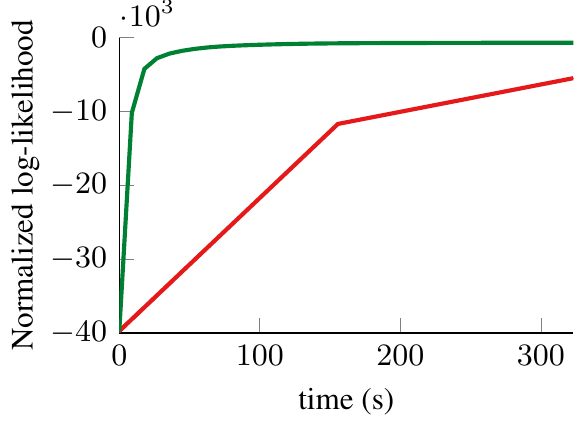}
    \caption{Non-stochastic learning}
    \label{fig:genetic-non-stochastic}
  \end{subfigure}
  \begin{subfigure}{1.3\figWgenesright}
    \includegraphics{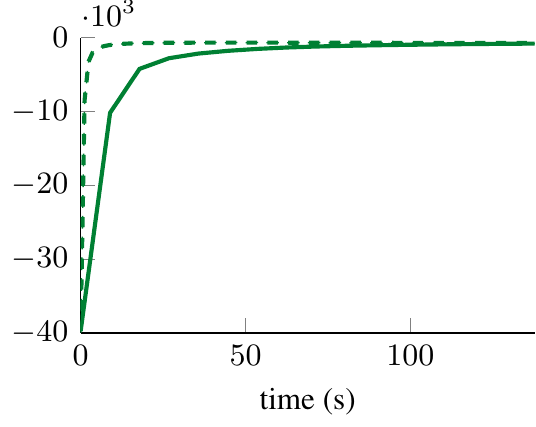}
    \caption{Stochastic vs. non-stochastic}
    \label{fig:genetic-stochastic}
  \end{subfigure}
  \caption{$n=150$, $a=1$.}
  \label{fig:genetic}
\end{figure}

Figure~\ref{fig:genetic} shows the performance of both algorithms. As with the synthetic experiments, \algoa converges much faster; stochastic updates increase its performance even more, as shown in Fig.~\ref{fig:genetic-stochastic}. Average runtimes and speed-up are given in Table~\ref{tab:runtimes}: \algoa runs almost an order of magnitude faster than \picard, and stochastic updates are more than two orders of magnitude faster, while providing slightly larger initial increases to the log-likelihood.

\section{Conclusion and future work}
\vspace*{-4pt}
We introduced \krondpps, a variant of DPPs with kernels structured as the Kronecker product of $m$ smaller matrices, and showed that typical operations over DPPs such as sampling and learning the kernel from data can be made efficient for \krondpps on previously untractable ground set sizes.

By carefully leveraging the properties of the Kronecker product, we derived for $m=2$ a low-complexity algorithm to learn the kernel from data which guarantees positive iterates and a monotonic increase of the log-likelihood, and runs in $\mathcal O(n\kappa^3 + N^2)$ time. This algorithm provides even more significant speed-ups and memory gains in the stochastic case, requiring only $\mathcal O(N^{3/2} + N\kappa^2)$ time and $\mathcal O(N + \kappa^2)$ space. Experiments on synthetic and real data showed that \krondpps can be learned efficiently on sets large enough that $L$ does not fit in memory.

While discussing learning the kernel, we showed that $L_1$ and $L_2$ cannot be updated simultaneously in a CCCP-style iteration since $g$ is not convex over $(S_1,S_2)$. However, it can be shown that $g$ is geodesically convex over the Riemannian manifold of positive definite matrices, which suggests that deriving an iteration which would take advantage of the intrinsic geometry of the problem may be a viable line of future work. 

\krondpps also enable fast sampling, in $\mathcal O(N^{3/2} + Nk^3)$ operations when using two sub-kernels and in $\mathcal O(Nk^3)$ when using three sub-kernels; this allows for exact sampling at comparable or even better costs than previous algorithms for approximate sampling. However, as we improve computational efficiency $L$, the subset size $k$ becomes limiting, due to the $\mathcal O(Nk^3)$ cost of sampling and learning. A necessary line of future work to allow for truly scalable DPPs is thus to overcome this computational bottleneck.

\bibliographystyle{abbrvnat}
\setlength{\bibsep}{1pt}
{\small\vspace*{-5pt}\bibliography{efficient-dpp}}

\clearpage
\appendix

\begin{center}
  \bfseries\large
  Appendix: Kronecker Determinantal Point Processes
\end{center}

\section{Proof of Prop.~\ref{prop:differenciation}}
\label{app:cccp-psd}
We use `$\vect$' to denote the operator that stacks columns of a matrix to form a vector;  conversely, `$\mat$' takes a vector with $k^2$ coefficients and returns a $k \times k$ matrix.

Let $L = L_1 \kron L_2$, $S_1 = L_1^{-1}, S_2 = L_2^{-1}$ and $S = S_1 \kron S_2 = L^{-1}$. We note $E_{ij}$ the matrix with all zeros except for a 1 at position $(i,j)$, its size being clear from context. We wish to solve
  \begin{equation}
    \label{eq:3}
    \nabla\!f_2(X) = -\nabla\!g_2(S_1) \quad \text{ and } \nabla\!f_1(X) = -\nabla\!g_1(S_2)
  \end{equation}
  It follows from the fact that \[\log\det(S_1 \kron S_2) = N_2 \log \det S_1 + N_1 \log \det S_2\] that $\nabla\!f_{S_2}(X) = N_2X^{-1}$ and $\nabla\!f_{S_1}(X) = N_1 X^{-1}$. Moreover, we know that 
  \begin{align*}
    \nabla\!g(S) &= -(I+S)^{-1} - S^{-1}\frac 1 n\sum\nolimits_i U_i(U_i^\top S^{-1} U_i)^{-1}U_i S^{-1}  \\
                 &= - S^{-1} -  S^{-1}\left(\frac 1 n\sum\nolimits_i U_i(U_i^\top S^{-1} U_i)^{-1}U_i - (I+S^{-1})^{-1} \right)S^{-1} \\
                 &= -(L + L \Delta L).
  \end{align*}

  The Jacobian of $S_1 \rightarrow S_1 \kron S_2$ is given by $J = \begin{pmatrix} \vect(E_{11} \kron S_2), \ldots, \vect(E_{N_1N_1} \kron S_2) \end{pmatrix}$.
  Hence,
  \begin{align*}
    \nabla\!f_1(X)_{ij} = -(\nabla\!g_1(S_1))_{ij} &\iff N_2X^{-1}_{ij} = (J^\top \vect(-\nabla g(S)))_{ij} \\
                                                 &\iff N_2X^{-1}_{ij} = \vect(E_{ij} \kron S_2)^\top \vect(L + L \Delta L) \\ 
                                                 &\iff N_2X^{-1}_{ij} = \Tr((E_{ij} \kron S_2) (L + L \Delta L)) \\
                                                 &\iff N_2X^{-1}_{ij} = \Tr( S_2(L + L \Delta L)_{(ij)}) \\
                                                 &\iff N_2X^{-1}_{ij} = \Tr\left(((I \kron S_2) (L + L \Delta L))_{(ij)}\right)
  \end{align*}
  The last equivalence is simply the result of indices manipulation. Thus, we have 
  \[  \nabla f_2(X) = -\nabla g_2(S_1) \iff X^{-1} = \frac 1 {N_2} \Tr_1\!\left((I \kron S_2)(L + L \Delta L)\right)\]

  Similarly, by setting $J'= \left(\vect(S_1 \kron E_{11}), \ldots, \vect(S_1 \kron E_{N_1N_1})\right)$, we have that
  \begin{align*}
    \nabla\!f_2(X)_{ij} = -(\nabla\!g_2(S_2))_{ij} &\iff N_1X^{-1}_{ij} = (J'^\top \vect(-\nabla g(S)))_{ij} \\
                                                   &\iff N_1X^{-1}_{ij} = \vect(S_1 \kron E_{ij})^\top \vect(L + L \Delta L) \\
                                                   &\iff N_1X^{-1}_{ij} = \Tr((S_1 \kron E_{ij})(L + L \Delta L)) \\
                                                   &\iff N_1X^{-1}_{ij} = \left(\sum\nolimits_{k,\ell=1}^{N_1} {S_1}_{k\ell} (L + L\Delta L)_{(\ell k)}\right)_{ij} \\
                                                   &\iff N_1X^{-1}_{ij} = \left(\sum\nolimits_{\ell=1}^{N_1} ((S_1 \kron I)(L + L\Delta L))_{(\ell \ell)}\right)_{ij}
  \end{align*}
  Hence, 
  \[\nabla f_{S_1}(X) = -\nabla g_{S_1}(S_2) \iff X^{-1} = \frac 1 {N_1} \Tr_2\left((S_1 \kron I) (L + L\Delta L)\right),\]
  which proves Prop.~\ref{prop:differenciation}. \qed

\section{Efficient updates for \algoa}
\label{app:cccp-complexity}
The updates to $L_1$ and $L_2$ are obtained efficiently through different methods; hence, the proof to Thm.~\ref{thm:complexity} is split into two sections. We write \[\Theta = \frac 1 n \sum_{i=1}^n U_iL_{Y_i}^{-1}U_i^\top \quad \text { (or }\Theta = U_iL_{Y_i}^{-1}U_i^\top \text{ for stochastic updates)}\] so that $\Delta = \Theta - (I+L)^{-1}$. Recall that $(A\kron B)_{(ij)} = a_{ij}B$.
\subsection{Updating $L_1$}
We wish to compute $X = \Tr_1\!\left((I \kron L_2^{-1})(L \Delta L)\right)$ efficiently. We have
\begin{align*}
  X_{ij} &= \Tr\left[((I \kron L_2^{-1})(L \Delta L))_{(ij)}\right] \\
         &= \Tr\left[L_2^{-1} (L\Delta L)_{(ij)}\right] \\
         &= \Tr\left[L_2^{-1}\sum\nolimits_{k,\ell=1}^{N_1} L_{(ik)} \Delta_{(k\ell)}L_{(\ell j)}\right] \\
         &= \sum\nolimits_{k,\ell=1}^{N_1} {L_1}_{ik} {L_1}_{\ell j} \Tr(L_2^{-1} L_2 \Delta_{(k\ell)} L_2) \\
         &= \sum\nolimits_{k,\ell=1}^{N_1} {L_1}_{ik} {L_1}_{\ell j} \underbrace{\Tr(\Theta_{(k\ell)} L_2)}_{A_{k\ell}} - \underbrace{\Tr((I+L)^{-1}_{(k\ell)} L_2)}_{B_{k\ell}} \\
         &= (L_1 A L_1 - L_1 B L_1)_{ij}.
 \end{align*}

The $N_1 \times N_1$ matrix $A$ can be computed in $\mathcal O(n\kappa^3 + N_1^2N_2^2)$ simply by pre-computing $\Theta$ in $\mathcal O(n\kappa^3)$ and then computing all $N_1^2$ traces in $\mathcal O(N_2^2)$ time. When doing stochastic updates for which $\Theta$ is sparse with only $\kappa^2$ non-zero coefficients, computing $A$ can be done in $\mathcal O(N_1^2\kappa^2 + \kappa^3)$. 

By diagonalizing $L_1 = P_1 D_1 P_1^\top$ and $L_2 = P_2 D_2 P_2^\top$, we have $(I+L)^{-1} = P D P^\top$ with $P = P_1 \kron P_2$ and $D = (I+ D_1 \kron D_2)^{-1}$. $P_1,P_2,D_1,D_2$ and $D$ can all be obtained in $\mathcal O(N_1^3 + N_2^3 + N_1N_2)$ as a consequence of Prop.~\ref{prop:basic}. Then 
\begin{align*}
  B_{ij} &= \Tr((I+L)^{-1}_{(ij)}L_2) \\
         &= \sum_k \Tr(P_{(ik)}D_{(kk)}P^\top_{(kj)}L_2) \\
         &= \sum_k {P_1}_{ik}{P_1}_{jk}\Tr(P_2 D_{(kk)}P_2^\top P_2 D_2 P_2^\top) \\
         &= \sum_k {P_1}_{ik}{P_1}_{jk} \underbrace{\Tr(D_{(kk)}D_2)}_{\alpha_k}.
\end{align*}
Let $\widehat D = \text{diag}(\alpha_1, \ldots, \alpha_{N_1})$, which can be computed in $\mathcal O(N_1N_2)$. Then $L_1BL_1 = P_1 D_1 \widehat D D_1 P_1$ is computable in $\mathcal O(N_1^3 + N_2^3)$.

Overall, the update to $L_1$ can be computed in $\mathcal O(n\kappa^3 + N_1^2N_2^2 + N_1^3+N_2^3)$, or in $\mathcal O(N_1^2\kappa^2 + \kappa^3 + N_1^3 + N_2^3)$ if the updates are stochastic. Moreover, if $\Theta$ is sparse with only $z$ non-zero coefficients (for stochastic updates $z=\kappa$), $A$ can be computed in $\mathcal O(\kappa^2)$ space, leading to an overall $\mathcal O(z^2 + N_1^2 + N_2^2)$ memory cost.

\subsection{Updating $L_2$}
We wish to compute $X = \Tr_2\!\left[(L_1^{-1} \kron I)(L \Delta L)\right]$ efficiently.
\begin{align*}
  X &= \sum\nolimits_{i=1}^{N_1}\left((L_1^{-1}\kron I)(L\Delta L)\right)_{(ii)} \\
     &= \sum\nolimits_{i=1}^{N_1} \left((I \kron L_2) (\Theta - (I+L)^{-1}) (L_1 \kron L_2)\right)_{(ii)} \\
     &= \sum\nolimits_{i,j=1}^{N_1}{L_1}_{ij}L_2 \Theta_{(ij)}L_2 - \sum_{i=1}^{N_1}((I \kron L_2)(I+L)^{-1}(L_1 \kron L_2))_{(ii)}\\
     &= \underbrace{L_2 \sum\nolimits_{i,j=1}^{N_1}{L_1}_{ij}\Theta_{(ij)}L_2}_{A} - \underbrace{\sum\nolimits_{i=1}^{N_1}((I \kron L_2)(I+L)^{-1}(L_1 \kron L_2))_{(ii)}}_B \\
\end{align*}
$A$ can be computed in $\mathcal O(n\kappa^3 + N_1^2N_2^2 + N_2^3)$. As before, when doing stochastic updates $A$ can be computed in $\mathcal O(N_1^2\kappa^2 + \kappa^3 + N_2^3)$ time and $\mathcal O(N_2^2+N_1^2 + \kappa^2)$ space due to the sparsity of $\Theta$. 

Regarding $B$, as all matrices commute, we can write \[(I \kron L_2)(I+L)^{-1}(L_1 \kron L_2) = (P_1 \kron P_2) \Lambda (P_1 \kron P_2)\] where $\Lambda = (I \kron D_2)(I+D_1\kron D_2)^{-1}(D_1 \kron D_2)$ is diagonal and is obtained in $\mathcal O(N_1^3 + N_2^3 + N_1N_2)$. Moreover, 
\[B = \sum\nolimits_{i=1}^{N_1} (P \Lambda P^\top)_{(ii)} = P_2\left(\sum\nolimits_{i,k=1}^{N_1}{P_1}_{ik} \Lambda_{(kk)}{P_1}_{ik}\right) P_2^\top,\]
which allows us to compute $B$ in $\mathcal O(N_1^2N_2 + N_2^3 + N_1^3)$ total.

Overall, we can obtain $X$ in $\mathcal O(n\kappa^3 + N_1^2N_2^2 + N_1^3 + N_2^3)$ or in $\mathcal O(N_1^2\kappa^2 + N_1^2N_2 + N_1^3 + N_2^3)$ for stochastic updates, in which case only $\mathcal O(N_1^2 + N_2^2 + \kappa^2)$ space is necessary.

\section{Proof of validity for joint updates}
\label{app:joint-updates}
In order to minimize the number of matrix multiplications, we equivalently (due to the properties of the Frobenius norm) minimize the equation 
\begin{equation}
  \label{eq:5}
  \|L^{-1} + \Delta - X \kron Y\|^2_F
\end{equation}
and set $
\begin{cases}
L_1' \leftarrow L_1 X L_1\\  
L_2' \leftarrow L_2 Y L_2.
\end{cases}$.

\begin{theorem}
  \label{thm:joint-psd}
  Let $L \succ 0$. Define $R := [ \vect(L_{(11)})^\top;\ldots;\vect(L_{(N_1N_1)})^\top]_{i,j=1}^{N_1} \in \mathbb R^{N_1N_1 \times N_2N_2}$. 

  Suppose that $R$ has an eigengap between its largest singular value and the next, and let $u, v, \sigma$ be the first singular vectors and value of $R$. Let $U = \mat(u)$ and $V = \mat(v)$. Then
$U$ and $V$ are either both positive definite or negative definite.

Moreover, for any value $\alpha \neq 0$, the pair $(\alpha U, \sigma / \alpha V)$ minimizes $\|L  - X \kron Y\|^2_F$.
\end{theorem}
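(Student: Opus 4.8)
The plan is to recognize \eqref{eq:5}/\eqref{eq:4} as a \emph{nearest Kronecker product} problem, reduce it to a rank-one approximation of $R$ (Van Loan--Pitsianis), and then extract definiteness of the reshaped singular vectors from $L\succ 0$ via a Perron--Frobenius-type argument on the cone of positive semidefinite matrices.

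\textbf{Reduction to a rank-one problem.} First I would note that block-vectorization is a Frobenius isometry. Since $(X\kron Y)_{(ij)}=x_{ij}Y$, stacking the rows $\vect\bigl((X\kron Y)_{(ij)}\bigr)^\top$ over $(i,j)$ produces exactly $\vect(X)\vect(Y)^\top$, while the same stacking applied to $L$ produces $R$. As this is merely a re-indexing of entries, $\|L-X\kron Y\|_F=\|R-\vect(X)\vect(Y)^\top\|_F$. Minimizing the right-hand side over all $X,Y$ is a best rank-one approximation, solved by Eckart--Young: under the assumed eigengap the optimum is the unique $\sigma u v^\top$. Writing $\vect(X)=\alpha u$ and $\vect(Y)=(\sigma/\alpha)v$ for $\alpha\neq 0$ and applying $\mat$ gives $X=\alpha U$, $Y=(\sigma/\alpha)V$; all these yield the same $X\kron Y=\sigma\,(U\kron V)$ and hence all minimize, which settles the ``moreover'' claim.

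\textbf{Operator reformulation.} Next I would introduce the linear map $\Phi(X):=\sum_{i,j}X_{ij}L_{(ij)}$ together with its adjoint $\Psi(Y):=[\langle L_{(ij)},Y\rangle]_{i,j}$; these are precisely $\mat(R^\top\vect X)$ and $\mat(R\vect Y)$, so the singular relations $R^\top u=\sigma v$ and $Rv=\sigma u$ read $\Phi(U)=\sigma V$ and $\Psi(V)=\sigma U$, and $U$ is the top eigenvector (eigenvalue $\sigma^2$, simple by the eigengap) of the self-adjoint positive operator $G:=\Psi\Phi=\Phi^*\Phi$. The key input is a positivity property inherited from $L\succ 0$: for any $b\neq 0$ one has $b^\top\Phi(X)b=\langle X,\,(I\kron b^\top)L(I\kron b)\rangle$ with $(I\kron b^\top)L(I\kron b)\succ 0$, so any nonzero $X\succeq 0$ gives $\Phi(X)\succ 0$; symmetrically $\Psi$ sends nonzero PSD matrices to PD matrices. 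Hence $G$ maps the PSD cone minus the origin into its interior.

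\textbf{Definiteness (the crux).} Because $\Phi(X)^\top=\Phi(X^\top)$, the maps $\Phi,\Psi,G$ preserve the splitting of matrix space into symmetric and antisymmetric parts, so the simple top eigenvector $U$ lies entirely in one part. I would then invoke the cone positivity to force $U$ into the interior of the PSD cone: if $U$ were symmetric but indefinite, split $U=U_+-U_-$ into its PSD/NSD spectral parts and compare the Rayleigh quotients of $U$ and $|U|=U_++U_-$, obtaining $\langle|U|,G|U|\rangle-\langle U,GU\rangle=4\langle\Phi(U_+),\Phi(U_-)\rangle$; maximality of $\sigma^2$ forces this difference to vanish, yet $\langle\Phi(U_+),\Phi(U_-)\rangle=\Tr(\Phi(U_+)\Phi(U_-))>0$ whenever $U_+,U_-\neq 0$ (a trace of two PD matrices), so one part must vanish and $U$ is semidefinite; then $GU=\sigma^2 U$ with $U\succeq 0$ nonzero gives $U\succ 0$ by cone positivity. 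The same comparison, together with the fact that the symmetric block already attains a strictly positive Rayleigh value, excludes the antisymmetric branch. Finally $V=\Phi(U)/\sigma\succ 0$ inherits the definiteness, and a global sign flip produces the ``both negative definite'' case.

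The routine parts are the isometric reduction and the scaling freedom in $\alpha$. I expect the main obstacle to be the last step, namely showing that the reshaped top singular vectors are genuinely \emph{sign-definite} rather than merely symmetric --- this is exactly where $L\succ 0$ must be used, through the cone-positivity of $\Phi$ and $\Psi$ and the Perron--Frobenius/Krein--Rutman mechanism, with the most delicate point being the exclusion of the antisymmetric (and indefinite symmetric) eigenvector branches.
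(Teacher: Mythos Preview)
Your reduction to a rank-one problem via the block-vec rearrangement is exactly the Van Loan--Pitsianis construction, and in fact the paper's own proof consists of the single sentence ``The proof is a consequence of~\citep[Thm.~11]{Loan93}.'' So on the ``moreover'' clause you and the paper coincide; on the definiteness clause you are actually supplying an argument where the paper simply defers to that citation. Your cone-positivity mechanism is correct: the identity $b^\top\Phi(X)b=\langle X,(I\kron b^\top)L(I\kron b)\rangle$ with $L\succ 0$ makes $G=\Psi\Phi$ strictly positive on the PSD cone, and the $U_\pm$ Rayleigh comparison legitimately forces a \emph{symmetric} top eigenvector to be definite, after which $V=\Phi(U)/\sigma\succ 0$ follows.

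The one place to tighten is the exclusion of the antisymmetric branch. Observing that ``the symmetric block already attains a strictly positive Rayleigh value'' is not enough: that does not by itself prevent an antisymmetric matrix from attaining a still larger value. What you need is a genuine domination, and the same $|\cdot|$ trick delivers it after complexifying. For real antisymmetric $A\neq 0$, the matrix $iA$ is Hermitian with eigenvalues in $\pm$ pairs, so in its spectral split $iA=H_+-H_-$ both parts are nonzero; moreover $|iA|=(A^\top A)^{1/2}$ is a \emph{real} symmetric PSD matrix with $\|\,|iA|\,\|_F=\|A\|_F$, and exactly as before
\[
\|\Phi(|iA|)\|_F^2-\|\Phi(A)\|_F^2 \;=\; 4\,\langle \Phi(H_+),\Phi(H_-)\rangle \;>\;0.
\]
Thus every nonzero antisymmetric matrix is strictly dominated by a symmetric one of the same norm, so the (simple) top eigenvector must be symmetric, and your argument then finishes the proof. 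With that refinement your proposal is a complete, self-contained proof that goes beyond what the paper itself writes down.
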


The proof is a consequence of~\citep[Thm. 11]{Loan93}. This shows that if $L$ is initially positive definite, setting the sign of $\alpha$ based on whether $U$ and $V$ are positive or negative definite\footnote{This can easily be done simply by checking the sign of the first diagonal coefficient of $U$, which will be positive if and only if $U \succ 0$.}, and updating \[\begin{cases}
    L_1 \leftarrow \alpha \, L_1UL_1 \\
    L_2 \leftarrow \sigma/\alpha \, L_2VL_2
  \end{cases}\]
maintains positive definite iterates. Given that if $L_1 \succ 0$ and $L_2 \succ 0$, $L_1 \kron L_2 \succ 0$, a simple induction then shows that by choosing an initial kernel estimate $L \succ 0$, subsequent values of $L$ will remain positive definite. 

By choosing $\alpha$ such that the new estimates $L_1$ and $L_2$ verify $\|L_1\| = \|L_2\|$, we verify all the conditions of Eq.~\ref{eq:4}.

\subsection{Algorithm for joint updates}
\label{app:joint-pseudocode}
Theorem~\ref{thm:joint-psd} leads to a straightforward iteration for learning matrices $L_1$ and $L_2$ based on the decomposition of the Picard estimate as a Kronecker product.

\begin{algorithm}[H]\small
   \caption{\algob iteration}
   \label{algob}
   \begin{algorithmic}
   \STATE {\bfseries Input:} Matrices $L_1, L_2$, training set $T$, step-size $a \geq 1$.
   \FOR{$i=1$ {\bfseries to} maxIter}
   \STATE $U,\sigma,V \leftarrow $ power\_method$(L^{-1} + \Delta)$ to obtain the first singular value and vectors of matrix $R$.
   \STATE $\alpha \leftarrow \text{sgn}(U_{11}) \sqrt{\sigma \|L_2VL_2\| /\|L_1UL_1\|}$
   \STATE $L_1 \leftarrow L_1 + a (\alpha\,L_1UL_1 - L_1)$
   \STATE $L_2 \leftarrow L_2 + a (\sigma/\alpha\, L_2VL_2)$
   \ENDFOR
   \textbf{return} $(L_1,L_2)$
\end{algorithmic}
\end{algorithm}

\end{document}